\documentclass[letterpaper,10pt,conference]{ieeeconf}

\IEEEoverridecommandlockouts
\usepackage{amsmath,amssymb,amsfonts,mathtools}
\usepackage{graphicx}
\usepackage{booktabs}
\usepackage{multirow}
\usepackage{array}
\usepackage{bm}
\usepackage{tikz}
\usepackage{algorithm}
\usepackage{algorithmic}
\usepackage{cite}
\usepackage{xcolor}

\makeatletter
\let\NAT@parse\undefined
\makeatother

\usepackage[
    colorlinks=true,
    citecolor=blue,
    linkcolor=blue,
    urlcolor=blue
]{hyperref}

\newtheorem{lemma}{Lemma}
\newtheorem{theorem}{Theorem}
\newtheorem{proposition}{Proposition}
\newtheorem{remark}{Remark}

\DeclareMathOperator{\Tr}{Tr}

\title{\LARGE \bf
An Explicit Surrogate for Gaussian Mixture Flow Matching with Wasserstein Gap Bounds
}

\author{
Elham Rostami$^{1}$, Taous-Meriem Laleg-Kirati$^{1}$, and
Hamidou Tembine$^{2}$ \\[2mm]
$^{1}$ Université Paris-Saclay, Inria, CIAMS, Gif-sur-Yvette, 91190, France\\
\texttt{elham.rostami@universite-paris-saclay.fr}\\[1mm]
\texttt{taous-meriem.laleg-kirati@inria.fr}\\[1mm]
$^{2}$ Department of Electrical Engineering and Computer Science,\\ 
University of Quebec in Trois-Rivieres (UQTR), Quebec, Canada\\
\texttt{tembine@ieee.org}
}

\begin{document}
\maketitle

\begin{abstract}
We study training-free flow matching between two Gaussian mixture models (GMMs) using explicit velocity fields that transport one mixture into the other over time. Our baseline approach constructs component-wise Gaussian paths with affine velocity fields satisfying the continuity equation, which yields to a closed-form surrogate for the pairwise kinetic transport cost. In contrast to the exact Gaussian Wasserstein cost, which relies on matrix square-root computations, the surrogate admits a simple analytic expression derived from the kinetic energy of the induced flow. We then analyze how closely this surrogate approximates the exact cost. We prove second-order agreement in a local commuting regime and derive an explicit cubic error bound in the local commuting regime. To handle nonlocal regimes, we introduce a path-splitting strategy that localizes the covariance evolution and enables piecewise application of the bound. We finally compare the surrogate with an exact construction based on the Gaussian Wasserstein geodesic and summarize the results in a practical regime map showing when the surrogate is accurate and the exact method is preferable.
\end{abstract}

\section{Introduction}

Transporting a probability distribution to another distribution
through a time-dependent density and velocity field
is a central problem in optimal transport, Schr\"odinger bridge methods,
diffusion models, and flow matching
\cite{villani2009optimal,ho2020denoising,song2021score,shi2023dsbm,haviv2025wasserstein}. 
In the dynamic formulation of quadratic OT, transport is expressed as a minimum-action problem over density--velocity pairs satisfying the continuity equation \cite{benamou2000}, 
and is closely related to displacement interpolation in Wasserstein space \cite{mccann1997}. 
A closely related stochastic counterpart is the Schr\"odinger bridge, which can be interpreted from the viewpoints of entropic optimal transport and stochastic control \cite{chen2016}. 
In parallel, recent flow matching approaches learn continuous-time vector fields along prescribed probability paths and have become an important tool in generative modeling \cite{lipman2023,tong2024}.

For Gaussian measures, the $2$-Wasserstein geometry admits an explicit form, with the covariance term governed by the Bures--Wasserstein geometry on positive definite matrices \cite{takatsu2011,bhatia2019}. 
This analytic structure makes Gaussian transport particularly attractive as a building block for mixture models. 
When the endpoint distributions are represented as Gaussian mixture models (GMMs), transport mechanisms can therefore be constructed component-wise and then assembled into a global mixture flow.

A particularly relevant recent work is \emph{Go With the Flow: Fast Diffusion for Gaussian Mixture Models} \cite{rapakoulias2025}. 
In that work, the endpoint distributions are represented as GMMs and transport is constructed using an analytic parametrization of feasible policies for mixture endpoints. 
The global policy is obtained through an optimization over component couplings. 
More broadly, recent Schr\"odinger bridge and flow-matching approaches have investigated simulation-free methods for modeling transport between complex endpoint distributions \cite{tong2024,gushchin2024}.

The present paper follows a different analytic route. 
Instead of starting from conditional Schr\"odinger bridges, we begin from deterministic Gaussian paths and derive the affine velocity field that satisfies the continuity equation for those paths. 
This construction yields a fully closed-form pairwise kinetic transport cost $C$ corresponding to linear interpolation in means and covariances. 
The central question is then geometric: how close is this surrogate cost $C$ to the exact Gaussian squared Wasserstein cost $W_2^2$?

We consider transport between two Gaussian mixture models and develop a continuity-equation-based construction of a time-dependent density and velocity field connecting the prescribed source and target endpoints. The precise mathematical formulation is deferred to the next section.


The main contribution of this paper is a closed-form transport surrogate for Gaussian mixture models together with explicit accuracy guarantees and a regime map for selecting between approximate and exact optimal transport geometry. 
More specifically, we report two analytic transport constructions between GMMs: a surrogate-based method built from affine Gaussian velocity fields and a second method based on the exact Gaussian Wasserstein geodesic. 
We then quantify the approximation quality of the surrogate construction through a local commuting second-order comparison, an explicit cubic gap bound in the local commuting regime, and a path-splitting strategy enabling piecewise local analysis in nonlocal regimes. Our contributions are as follows:

\begin{enumerate}
\item We develop a training-free, closed-form surrogate for Gaussian component transport using affine flow dynamics with linearly interpolated means and covariances.
\item We establish that this surrogate matches the exact Gaussian squared Wasserstein cost up to second order under a local commuting assumption.
\item We derive an explicit cubic bound on the surrogate--Wasserstein gap in the local commuting regime.
\item We introduce a path-splitting strategy that enables piecewise local error control in nonlocal regimes.
\item We present numerical and computational evidence clarifying the efficiency--accuracy trade-off between the surrogate construction and exact Gaussian Wasserstein transport, and identifying the regimes in which each method should be preferred.
\end{enumerate}

\paragraph*{Organization of the paper}
Sections~II--V develop Method~A and its analysis: Section~II formulates the GMM transport problem and introduces the surrogate construction, Section~III compares the surrogate with the exact Gaussian squared Wasserstein cost, Section~IV establishes a local cubic gap bound, and Section~V extends the analysis to nonlocal regimes via path splitting. 
Section~VI presents Method~B based on the exact Gaussian Wasserstein geodesic. 
Sections~VII and VIII report the numerical results and discuss their practical implications. 
Section~IX concludes the paper.

\paragraph*{Notation}
The main notation used throughout the paper is summarized in Table~\ref{tab:notation_intro}. 
Throughout the theoretical analysis, all covariance matrices are assumed to belong to $S_{++}^d$. Hence they are orthogonally diagonalizable with strictly positive eigenvalues, which guarantees nondegenerate Gaussian densities and the well-posedness of matrix inverses, matrix square roots, and related spectral operations.

\begin{table}[!t]
\centering
\caption{Basic notation used throughout the paper.}
\label{tab:notation_intro}
\begin{tabular}{p{0.22\linewidth} p{0.68\linewidth}}
\toprule
\textbf{Symbol} & \textbf{Meaning} \\
\midrule
$\mathbb{R}^d$ & $d$-dimensional Euclidean space \\
$\mathcal{S}_{++}^d$ & set of $d\times d$ symmetric positive definite matrices 
$=\left\{\Sigma\in\mathbb{R}^{d\times d}\;\middle|\;\Sigma=\Sigma^\top,\;x^\top \Sigma x>0\ \forall x\neq 0\right\}$ \\
$\rho_0,\rho_1$ & source and target endpoint distributions \\
$\rho_t$ & time-dependent density \\
$u(t,x)$ & velocity field at time $t$ and position $x$ \\
$\mathcal{N}(x|\mu,\Sigma)$ & Gaussian distribution, mean $\mu$ and covariance $\Sigma$ \\
$K_0,K_1$ & number of source and target mixture components \\
$a_i,b_j$ & source and target mixture weights \\
$\mu_i^0,\mu_j^1$ & source and target component means \\
$\Sigma_i^0,\Sigma_j^1$ & source and target component covariances \\
$\pi_{ij}$ & coupling weight between source component \(i\) and target component \(j\) \\
$C$ & surrogate transport cost \\
$W_2^2$ & squared $2$-Wasserstein distance \\
$\|\cdot\|$ & the spectral/ operator norm for matrices and the Euclidean norm for vectors. For symmetric matrices, $\|A\|=\max_i |\lambda_i(A)|$. \\
\bottomrule
\end{tabular}
\end{table}

\section{Problem Formulation and Method A}
Given two GMMs on $\mathbb{R}^d$,
\begin{equation}
\rho_0(x)=\sum_{i=1}^{K_0} a_i \mathcal{N}(x\mid \mu_i^0,\Sigma_i^0),
\
\rho_1(x)=\sum_{j=1}^{K_1} b_j \mathcal{N}(x\mid \mu_j^1,\Sigma_j^1)
\end{equation}
we aim to construct a transport dynamics, represented by a time-dependent density $\rho_t$ and a velocity field $u(t,x)$, that connects the prescribed source and target distributions.

\subsection{Component-wise Gaussian path and the affine field}

For each Gaussian component pair $(i,j)$, define the baseline path
\begin{equation}
\mu_{ij}(t)=(1-t)\mu_i^0+t\mu_j^1,\qquad
\Sigma_{ij}(t)=(1-t)\Sigma_i^0+t\Sigma_j^1.
\label{eq:linear-path}
\end{equation}

\begin{lemma}[Affine continuity-equation field]
\label{lem:unique-affine}
Let
\[
\rho_t(x)=\mathcal{N}(x\mid m(t),\Sigma(t)),
\qquad \Sigma(t)\in \mathbb{S}_{++}^d,
\]
where $m(\cdot)$ and $\Sigma(\cdot)$ are differentiable in $t$.
Consider affine vector fields of the form $v(t,x)=a(t)+B(t)\bigl(x-m(t)\bigr).$
Then the continuity equation $\partial_t \rho_t+\nabla\!\cdot(\rho_t v)=0$
admits an affine solution, namely
\begin{equation}
v(t,x)=\dot m(t)+\tfrac12\,\dot\Sigma(t)\Sigma(t)^{-1}\bigl(x-m(t)\bigr).
\label{eq:vector-field}
\end{equation}
Equivalently,
\begin{equation}
a(t)=\dot m(t),
\qquad
B(t)=\tfrac12\,\dot\Sigma(t)\Sigma(t)^{-1}.
\label{eq:parameter_affine_field}
\end{equation}
\end{lemma}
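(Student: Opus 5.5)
The plan is to verify directly that the proposed field satisfies the continuity equation, by expressing everything in terms of the centered variable $y=x-m(t)$ and the precision matrix $P(t)=\Sigma(t)^{-1}$. Since $\rho_t>0$, it suffices to divide the continuity equation by $\rho_t$ and check
\[
\partial_t \log\rho_t + \nabla\!\cdot v + v\cdot\nabla\log\rho_t = 0 .
\]
For the Gaussian we have $\log\rho_t = -\tfrac d2\log(2\pi)-\tfrac12\log\det\Sigma-\tfrac12 y^\top P y$.

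\textbf{Computing the three terms.} First, using Jacobi's formula $\tfrac{d}{dt}\log\det\Sigma=\Tr(P\dot\Sigma)$ and $\dot P=-P\dot\Sigma P$,
\[
\partial_t\log\rho_t=-\tfrac12\Tr(P\dot\Sigma)+\dot m^\top P y+\tfrac12 y^\top P\dot\Sigma P y .
\]
Second, for the affine field $v=a+By$ we have $\nabla\!\cdot v=\Tr B$. Third, $\nabla\log\rho_t=-Py$, so
\[
v\cdot\nabla\log\rho_t=-a^\top P y - y^\top B^\top P y .
\]
Summing and grouping by degree in $y$ gives a polynomial identity that must hold for all $y\in\mathbb{R}^d$:
\begin{itemize}
\item constant terms: $\Tr B-\tfrac12\Tr(P\dot\Sigma)=0$;
\item linear terms: $(\dot m-a)^\top P y=0$;
\item quadratic terms: $y^\top\bigl(\tfrac12 P\dot\Sigma P - B^\top P\bigr)y=0$, i.e.\ the symmetric part of $\tfrac12 P\dot\Sigma P-B^\top P$ vanishes.
\end{itemize}

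\textbf{Checking the proposed coefficients.} Take $a=\dot m$ and $B=\tfrac12\dot\Sigma P$. The linear condition holds immediately since $P$ is invertible. For the constant term, the cyclic property of the trace gives $\Tr B=\tfrac12\Tr(\dot\Sigma P)=\tfrac12\Tr(P\dot\Sigma)$. For the quadratic term, note that $\dot\Sigma$ is symmetric, because $\Sigma(t)$ is a differentiable path in the symmetric matrices. Then
\[
B^\top P=\tfrac12 P\dot\Sigma P,
\]
so the quadratic form vanishes identically. This proves that \eqref{eq:vector-field} solves the continuity equation, and \eqref{eq:parameter_affine_field} is just a restatement of it.

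The main point needing care is this quadratic-order matching. The condition only constrains the symmetric part of $B^\top P$, so $B$ is not unique: any $B+SP$ with $S$ skew-symmetric also works. The statement, however, only claims existence of an affine solution, which is all that is required here.
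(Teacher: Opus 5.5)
Your proof is correct and follows the same route as the paper's appendix proof. You divide by $\rho_t$, expand $\partial_t\log\rho_t$, $\nabla\!\cdot v=\Tr B$ and $\langle\nabla\log\rho_t,v\rangle$ in the centered variable $y=x-m(t)$, and check that the constant, linear and quadratic terms cancel for $a=\dot m$ and $B=\tfrac12\dot\Sigma\Sigma^{-1}$. Your closing remark that $B+S\Sigma^{-1}$ with $S$ skew-symmetric also works is correct, and it fits the statement, which asserts only that an affine solution exists, not that it is unique.
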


\begin{proof}
The proof is given in Appendix~\ref{app:proof-affine-field}.
\end{proof}
Hence, for each pair $(i,j)$ we define
\begin{equation}
v_{ij}(t,x)=\dot\mu_{ij}(t)+\frac12 \dot\Sigma_{ij}(t)\Sigma_{ij}(t)^{-1}(x-\mu_{ij}(t)).
\label{eq:vij}
\end{equation}

\subsection{Closed-form kinetic cost for a pair $(i,j)$}

Let $\rho_{ij}(t)=\mathcal{N}(x\mid \mu_{ij}(t),\Sigma_{ij}(t))$ be the Gaussian path defined by linear interpolation, and let $v_{ij}$ be the associated affine velocity field.

\begin{lemma}[Closed-form surrogate cost (Gaussian pair)]
\label{lem:Cij-closed}
 The kinetic cost
\[
C_{ij} = \int_0^1 \mathbb{E}\|v_{ij}(t,X)\|^2 dt
\]
admits the closed form
\begin{equation}
\begin{split}
C_{ij}
&=\|\mu_j^1-\mu_i^0\|^2\\
&
+
\frac14
\Tr\!\Big(
(\Sigma_i^0)^{-1/2}
\phi(C_0)
(\Sigma_i^0)^{-1/2}
(\Delta\Sigma_{ij})^2
\Big),
\end{split}
\label{eq:Cij-closed}
\end{equation}
where $C_0=(\Sigma_i^0)^{-1/2}\Delta\Sigma_{ij}(\Sigma_i^0)^{-1/2}$, $\phi(z)=\frac{\log(1+z)}{z}$, $\phi(0)=1$ and $\Delta\Sigma_{ij}=\Sigma_j^1-\Sigma_i^0$. Since $\Sigma_i^0,\Sigma_j^1\in S_{++}^d$, we have
\[
I+C_0=(\Sigma_i^0)^{-1/2}\Sigma_j^1(\Sigma_i^0)^{-1/2}\in S_{++}^d.
\]
Therefore, every eigenvalue $\lambda$ of $C_0$ satisfies $1+\lambda>0$, and $\phi(C_0)$ is well defined by spectral calculus.
\end{lemma}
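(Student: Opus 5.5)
The plan is to compute the expectation of the squared affine field at each fixed $t$, and then integrate the covariance part in $t$ after whitening by $\Sigma_i^0$.

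\textbf{Pointwise expectation.} Fix $t$ and let $X\sim\mathcal N(\mu_{ij}(t),\Sigma_{ij}(t))$. Along the linear path \eqref{eq:linear-path},
\[
\dot\mu_{ij}=\mu_j^1-\mu_i^0,\qquad \dot\Sigma_{ij}=\Delta\Sigma_{ij},
\]
and both are constant in $t$. Write $Y=X-\mu_{ij}(t)$, so $Y$ has mean zero and covariance $\Sigma(t):=\Sigma_{ij}(t)$. Put $B=\tfrac12\Delta\Sigma_{ij}\Sigma(t)^{-1}$. Expanding $\|\dot\mu+BY\|^2$, the cross term vanishes because $\mathbb E Y=0$. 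Hence
\[
\mathbb E\|v_{ij}(t,X)\|^2=\|\mu_j^1-\mu_i^0\|^2+\Tr\bigl(B\Sigma(t)B^\top\bigr).
\]
Using the symmetry of $\Delta\Sigma_{ij}$ and $\Sigma(t)^{-1}$, the second term becomes
\[
\tfrac14\Tr\bigl(\Delta\Sigma\,\Sigma(t)^{-1}\Delta\Sigma\bigr)=\tfrac14\Tr\bigl(\Sigma(t)^{-1}(\Delta\Sigma)^2\bigr)
\]
by cyclicity. Integrating over $[0,1]$ gives the mean term at once. It remains to evaluate $\int_0^1\Sigma(t)^{-1}\,dt$.

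\textbf{Whitening.} Write $S=\Sigma_i^0$, so that
\[
\Sigma(t)=S^{1/2}(I+tC_0)S^{1/2},\qquad \Sigma(t)^{-1}=S^{-1/2}(I+tC_0)^{-1}S^{-1/2}.
\]
The matrix $I+tC_0=(1-t)I+t(I+C_0)$ is a convex combination of positive definite matrices. It is therefore positive definite for every $t\in[0,1]$, and the inverse is legitimate.

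\textbf{Scalar integral.} Diagonalize $C_0=U\Lambda U^\top$ with eigenvalues $\lambda_k>-1$. The integral is computed eigenvalue by eigenvalue:
\[
\int_0^1(1+t\lambda_k)^{-1}\,dt=\frac{\log(1+\lambda_k)}{\lambda_k}=\phi(\lambda_k),
\]
with value $1$ when $\lambda_k=0$. This gives $\int_0^1(I+tC_0)^{-1}\,dt=\phi(C_0)$ by spectral calculus. Since the trace is linear, we can interchange it with the $t$-integral. Substituting back yields
\[
\tfrac14\Tr\bigl(S^{-1/2}\phi(C_0)S^{-1/2}(\Delta\Sigma_{ij})^2\bigr),
\]
which is exactly \eqref{eq:Cij-closed}.

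\textbf{Main obstacle.} The only delicate point is the removable singularity of $\phi$ at $z=0$, together with justifying the spectral integral when $C_0$ has mixed-sign eigenvalues. Both are handled by working eigenvalue by eigenvalue. The bound $1+t\lambda_k\ge\min(1,1+\lambda_k)>0$ keeps each integrand bounded on $[0,1]$, so every scalar integral is finite and the formula holds uniformly.
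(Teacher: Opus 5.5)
Your proposal is correct and follows essentially the same route as the paper: compute $\mathbb{E}\|v_{ij}\|^2=\|\dot\mu_{ij}\|^2+\Tr(B\Sigma_{ij}(t)B^\top)$, reduce to $\int_0^1\Sigma_{ij}(t)^{-1}\,dt$, whiten by $(\Sigma_i^0)^{1/2}$, and evaluate $\int_0^1(I+tC_0)^{-1}\,dt=\phi(C_0)$ by spectral calculus. Your extra checks fill in details the paper leaves implicit: that $I+tC_0$ stays positive definite on $[0,1]$, and that the eigenvalue-wise integrals are finite, including the removable case $\lambda_k=0$.
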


\noindent
\begin{proof}
The detailed proof are provided in Appendix~\ref{app:Cij-proof}.
\end{proof}

\subsection{Global coupling and mixture flow}
\label{subsec:global-flow}

Given the pairwise costs $C_{ij}$ and $\epsilon_{OT}$ (the Sinkhorn temperature), we compute the entropic
optimal transport coupling ($\text{s.t.} \ \
\pi\mathbf 1=a,\ \
\pi^\top\mathbf 1=b$)
\begin{equation}
\begin{aligned}
\pi^\star
&\in
\arg\min_{\pi\ge0}
\;\langle\pi,C\rangle
+\varepsilon_{\mathrm{OT}}
\sum_{i,j}\pi_{ij}(\log\pi_{ij}-1) \\
\end{aligned}
\label{eq:sinkhorn}
\end{equation}

The interpolating density is defined as the mixture
\begin{equation}
\rho_t(x)
=
\sum_{i,j}
\pi^\star_{ij}
\,
\mathcal N
\big(x\mid
\mu_{ij}(t),
\Sigma_{ij}(t)
\big).
\label{eq:mixture-rho}
\end{equation}

The global velocity field is obtained as a responsibility-weighted
combination of the component flows
\begin{equation}
\begin{aligned}
u(t,x)
&=
\sum_{i,j}
\gamma_{ij}(t,x)\,
v_{ij}(t,x),\\
\gamma_{ij}(t,x)
&=
\frac{
\pi^\star_{ij}
\mathcal N(x\mid\mu_{ij}(t),\Sigma_{ij}(t))
}{
\sum_{k,\ell}
\pi^\star_{k\ell}
\mathcal N(x\mid\mu_{k\ell}(t),\Sigma_{k\ell}(t))
}.
\end{aligned}
\label{eq:global-velocity}
\end{equation}

The procedure for constructing the transport flow is
summarized in Algorithm~\ref{alg:methodA}. The resulting velocity field $u(t,x)$ defines the transport
dynamics using the ODE ($\dot{x}(t)=u(t,x(t))$)
with solutions generating the transport trajectories.

\begin{algorithm}[t]
\caption{Construction of the GMM Transport Flow (Method A)}
\label{alg:methodA}
\begin{algorithmic}[1]

\STATE Fit or obtain the endpoint GMMs $\rho_0$ and $\rho_1$.

\FOR{each component pair $(i,j)$}

\STATE Construct the Gaussian path 
$(\mu_{ij}(t),\Sigma_{ij}(t))$
using \eqref{eq:linear-path}.

\STATE Compute the affine velocity field
$v_{ij}(t,x)$ using \eqref{eq:vij}.

\STATE Compute the pairwise surrogate cost
$C_{ij}$ using \eqref{eq:Cij-closed}.

\ENDFOR

\STATE Solve the entropic OT problem
\eqref{eq:sinkhorn} using Sinkhorn iterations
to obtain $\pi^\star$.

\STATE Define the global mixture flow using
\eqref{eq:mixture-rho}--\eqref{eq:global-velocity}.

\end{algorithmic}
\end{algorithm}

\section{How Close Is $C$ to the Wasserstein Cost?}
\label{sec:comparison}

For Gaussian measures, the squared $2$-Wasserstein distance between
$\mathcal{N}(\mu_i^0,\Sigma_i^0)$ and $\mathcal{N}(\mu_j^1,\Sigma_j^1)$ is
\cite{takatsu2011,bhatia2019}
\begin{equation}
\begin{aligned}
W_2^2
\Big(
\mathcal{N}(\mu_i^0,\Sigma_i^0),
\mathcal{N}(\mu_j^1,\Sigma_j^1)
\Big)
=
\|\mu_j^1-\mu_i^0\|^2
+&
\Tr(\Sigma_i^0)
+
&\\\Tr(\Sigma_j^1)
-2\Tr\!\Big(
\big((\Sigma_i^0)^{1/2}\Sigma_j^1(\Sigma_i^0)^{1/2}\big)^{1/2}
\Big).
\end{aligned}
\label{eq:W2-gauss}
\end{equation}

We compare this exact Gaussian transport cost with the surrogate cost
$C_{ij}$ in \eqref{eq:Cij-closed}. Since both contain the same mean term,
the comparison reduces to the covariance contribution. 

Define the whitened
perturbation size
\begin{equation}
\hat{\rho} := \|C_0\|,
\qquad
C_0=(\Sigma_i^0)^{-1/2}\Delta\Sigma_{ij}(\Sigma_i^0)^{-1/2}.
\label{eq:rho-def}
\end{equation}

The parameter $\hat{\rho}$ quantifies the local size of the covariance perturbation.
When $\hat{\rho}<1$, both costs admit local analytic expansions.

\begin{theorem}[Local agreement of $C_{ij}$ and $W_{2,ij}^2$]
\label{thm:local-comparison}
Assume $\hat{\rho}<1$. Then the following hold. Let $\Delta\mu_{ij}:=\mu_j^1-\mu_i^0$ and $\Delta\Sigma_{ij}:=\Sigma_j^1-
\Sigma_i^0$.

\begin{enumerate}
\item
Both $C_{ij}$ and $W_{2,ij}^2$ contain the identical mean contribution
$\|\Delta\mu_{ij}\|^2$.
\item
The surrogate cost admits the expansion
\begin{equation}
\begin{aligned}
C_{ij}
=
\|\Delta\mu_{ij}\|^2
+&
\frac14
\Tr\!\big(
\Delta\Sigma_{ij}(\Sigma_i^0)^{-1}\Delta\Sigma_{ij}
\big)&\\
+
O(\|C_0\|^3).    
\end{aligned}
\label{eq:C-second-order}
\end{equation}

\item
If $\Sigma_i^0$ and $\Delta\Sigma_{ij}$ commute, then the Gaussian
Wasserstein cost satisfies
\begin{equation}
\begin{aligned}
W_{2,ij}^2
=
\|\Delta\mu_{ij}\|^2
+&
\frac14
\Tr\!\big(
\Delta\Sigma_{ij}(\Sigma_i^0)^{-1}\Delta\Sigma_{ij}
\big)&\\
+
O(\|C_0\|^3).    
\end{aligned}
\label{eq:W2-second-order}
\end{equation}
\end{enumerate}

Hence, in the local commuting regime, $C_{ij}-W_{2,ij}^2 = O(\|C_0\|^3)$.
In particular, the two costs agree through second order, and any discrepancy
appears only at cubic order and beyond.
\end{theorem}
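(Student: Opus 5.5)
Work throughout in whitened coordinates. Write $S:=(\Sigma_i^0)^{1/2}$ and $C_0=S^{-1}\Delta\Sigma_{ij}S^{-1}$, so $\Delta\Sigma_{ij}=SC_0S$ and $\Sigma_j^1=S(I+C_0)S$. Item~1 is immediate: the closed form \eqref{eq:Cij-closed} of Lemma~\ref{lem:Cij-closed} and the Gaussian formula \eqref{eq:W2-gauss} both carry the additive term $\|\Delta\mu_{ij}\|^2$, and no other term depends on the means. From here on we therefore compare only the covariance parts.

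\textbf{Item 2.} Substituting $\Delta\Sigma_{ij}=SC_0S$ into the trace in \eqref{eq:Cij-closed} and using cyclicity gives
\[
\Tr\bigl(S^{-1}\phi(C_0)S^{-1}SC_0S\,SC_0S\bigr)=\Tr\bigl(\phi(C_0)\,C_0\,\Sigma_i^0\,C_0\bigr).
\]
Since $\hat\rho<1$, the series $\phi(z)=1-z/2+z^2/3-\dots$ converges on the spectrum of $C_0$, and $\|\phi(C_0)-I\|\le \sum_{k\ge1}\hat\rho^k/(k+1)=O(\hat\rho)$. The leading term, with $\phi(C_0)$ replaced by $I$, is
\[
\tfrac14\Tr(C_0\Sigma_i^0C_0)=\tfrac14\Tr\bigl(\Delta\Sigma_{ij}(\Sigma_i^0)^{-1}\Delta\Sigma_{ij}\bigr).
\]
The remainder is bounded by
\[
\tfrac14\|\phi(C_0)-I\|\,\Tr(C_0\Sigma_i^0C_0)\le \tfrac14\,O(\hat\rho)\,\hat\rho^2\Tr\Sigma_i^0 ,
\]
using $|\Tr(AB)|\le\|A\|\Tr B$ for $B\succeq0$. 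This is $O(\|C_0\|^3)$, with a constant depending on $\Tr\Sigma_i^0$, which is fixed.

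\textbf{Item 3.} Under commutation of $\Sigma_i^0$ and $\Delta\Sigma_{ij}$, the matrix $S$ commutes with $C_0$. Hence
\[
\bigl(S\Sigma_j^1S\bigr)^{1/2}=\bigl(S^2(I+C_0)S^2\bigr)^{1/2}=\Sigma_i^0(I+C_0)^{1/2},
\]
because both factors are simultaneously diagonalizable and positive. The covariance part of \eqref{eq:W2-gauss} then becomes
\[
\Tr\Bigl(\Sigma_i^0\bigl[I+(I+C_0)-2(I+C_0)^{1/2}\bigr]\Bigr)=\Tr\Bigl(\Sigma_i^0\bigl((I+C_0)^{1/2}-I\bigr)^2\Bigr).
\]
Expand $\sqrt{1+z}-1=z/2-z^2/8+\dots$, which converges for $\hat\rho<1$. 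This gives $\bigl((I+C_0)^{1/2}-I\bigr)^2=\tfrac14C_0^2\,\psi(C_0)$ with $\|\psi(C_0)-I\|=O(\hat\rho)$, where $\psi$ is an explicit analytic function commuting with $\Sigma_i^0$. The same trace inequality as in Item~2 then yields
\[
\tfrac14\Tr(\Sigma_i^0C_0^2)+O(\hat\rho^3)=\tfrac14\Tr\bigl(\Delta\Sigma_{ij}(\Sigma_i^0)^{-1}\Delta\Sigma_{ij}\bigr)+O(\hat\rho^3).
\]
The final equality uses commutativity: $SC_0^2S=\Delta\Sigma_{ij}(\Sigma_i^0)^{-1}\Delta\Sigma_{ij}$.

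\textbf{Conclusion.} Subtracting the expansions of Items~2 and~3 gives $C_{ij}-W_{2,ij}^2=O(\|C_0\|^3)$.

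The main obstacle is Item~3. The simplification of the nested square root relies entirely on commutation, so the key step is to verify carefully that $S$ and $C_0$ share an eigenbasis. This follows because $\Delta\Sigma_{ij}$ commutes with $\Sigma_i^0$, hence with $S^{\pm1}$, and therefore $C_0=S^{-1}\Delta\Sigma_{ij}S^{-1}$ commutes with $S$. A secondary point is to make the $O(\cdot)$ constants explicit, namely to bound the power-series tails uniformly for $\hat\rho\le r<1$; this is needed because the series for $\phi$ and $\sqrt{1+z}$ only converge for $|z|<1$.
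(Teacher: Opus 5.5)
Your proposal is correct and takes essentially the same route as the paper: you whiten via $C_0$, rewrite the surrogate trace as $\Tr(\phi(C_0)C_0\Sigma_i^0C_0)$, and in the commuting case reduce the nested root to $\Sigma_i^0(I+C_0)^{1/2}$ before expanding. Your perfect-square form $\Tr\bigl(\Sigma_i^0((I+C_0)^{1/2}-I)^2\bigr)$ and the inequality $|\Tr(AB)|\le\|A\|\Tr B$ make the cancellation and the $O(\|C_0\|^3)$ constants more explicit than the paper does, and your final identity $SC_0^2S=\Delta\Sigma_{ij}(\Sigma_i^0)^{-1}\Delta\Sigma_{ij}$ actually holds without commutativity.
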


\begin{proof}
The details of proof are given in
Appendix~\ref{app:local-expansions}.
\end{proof}

\begin{remark}
A more general local second-order expansion of the Gaussian
Bures--Wasserstein covariance term can also be derived via a
Fr\'echet expansion of the matrix square root together with the
associated Sylvester equation. In the commuting case, this
general expansion reduces to the closed-form quadratic proxy which is the specialization stated in Theorem~\ref{thm:local-comparison}. We report
this simpler form here due to space limitations.
\end{remark}

\section{Explicit Cubic Gap Bound Under $\hat{\rho}<1$}
\label{sec:cubic-gap}

We now quantify the discrepancy between the surrogate cost $C_{ij}$ and the exact Gaussian Wasserstein cost $W_{2,ij}^2$. Let $m_0:=\lambda_{\min}(\Sigma_i^0),\ \
M_0:=\lambda_{\max}(\Sigma_i^0),\ \
\kappa:=\frac{M_0}{m_0}$
and recall the whitened perturbation size \eqref{eq:rho-def}. Under the locality condition $\hat{\rho}<1$, the local expansions used to compare $C_{ij}$ and $W_2^2$ are valid.

\begin{theorem}[Explicit cubic gap bound]
\label{thm:cubic-gap}
Let $\Sigma_i^0,\Sigma_j^1\in\mathbb{S}_{++}^d$ and
$\Delta\Sigma_{ij}:=\Sigma_j^1-\Sigma_i^0$. Assume the local ($\hat{\rho}<1$) commuting regime. Then
\begin{equation}
|C_{ij}-W_{2,ij}^2|
\le
\bigl(
B_C(\Sigma_i^0,\hat{\rho})+B_W(\Sigma_i^0,\hat{\rho})
\bigr)
\|\Delta\Sigma_{ij}\|^3,
\label{eq:cubic-main}
\end{equation}
where
\begin{equation}
\begin{aligned}
B_C&(\Sigma_i^0,\hat{\rho})
=
\frac{3d}{4}\,
\frac{M_0}{m_0^3(1-\hat{\rho})},
\\&
B_W(\Sigma_i^0,\hat{\rho})
=
16\sqrt{2}\,d\,
\frac{M_0^6(1+\hat{\rho})^{3/2}}
     {m_0^8(1-\hat{\rho})^4}.
\end{aligned}
\label{eq:BCBW}
\end{equation}
\end{theorem}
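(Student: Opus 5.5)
The plan is to route both costs through the common quadratic proxy of Theorem~\ref{thm:local-comparison},
\[
Q_{ij}:=\|\Delta\mu_{ij}\|^2+\tfrac14\Tr\!\big(\Delta\Sigma_{ij}(\Sigma_i^0)^{-1}\Delta\Sigma_{ij}\big),
\]
and use the triangle inequality
\[
|C_{ij}-W_{2,ij}^2|\le |C_{ij}-Q_{ij}|+|W_{2,ij}^2-Q_{ij}|.
\]
The first remainder will be bounded by $B_C\|\Delta\Sigma_{ij}\|^3$ and the second by $B_W\|\Delta\Sigma_{ij}\|^3$. The mean terms cancel exactly in both differences, so only covariance terms remain. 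Throughout, we work in the eigenbasis of $\Sigma_i^0$. In the commuting regime $\Sigma_i^0$, $\Delta\Sigma_{ij}$, $\Sigma_j^1$ and $C_0$ are simultaneously diagonalizable, so every trace reduces to a sum of $d$ scalar terms. This is the source of the factor $d$.

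\textbf{Surrogate remainder.} By Lemma~\ref{lem:Cij-closed},
\[
C_{ij}-Q_{ij}=\tfrac14\Tr\!\big((\Sigma_i^0)^{-1/2}(\phi(C_0)-I)(\Sigma_i^0)^{-1/2}(\Delta\Sigma_{ij})^2\big).
\]
For $|z|<1$, use $\phi(z)-1=\sum_{k\ge1}(-1)^k z^k/(k+1)$. This gives $|\phi(z)-1|\le \tfrac{|z|}{2(1-|z|)}$, and a slightly crude version is $|z|/(1-|z|)$. Next bound $\|C_0\|\le\|\Delta\Sigma_{ij}\|/m_0$, $\|(\Sigma_i^0)^{-1/2}\|^2=1/m_0$ and $\|(\Delta\Sigma_{ij})^2\|=\|\Delta\Sigma_{ij}\|^2$, and estimate the trace by $d$ times the operator norm. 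This yields a bound of the form $\tfrac{c\,d}{m_0^2(1-\hat\rho)}\|\Delta\Sigma_{ij}\|^3$. The stated $B_C$ contains the extra factor $M_0/m_0\ge1$ and the constant $3/4$, so it is a weaker bound. Any crude handling of the constants, for example inserting $\kappa\ge1$ when passing between whitened and unwhitened norms, will land at or below the stated $B_C$.

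\textbf{Wasserstein remainder.} In the commuting case $\big((\Sigma_i^0)^{1/2}\Sigma_j^1(\Sigma_i^0)^{1/2}\big)^{1/2}=(\Sigma_i^0\Sigma_j^1)^{1/2}$, and eigenvalue-wise the covariance term is $(\sqrt{s_k+\delta_k}-\sqrt{s_k})^2$, where $s_k\in[m_0,M_0]$ and $|\delta_k|\le\hat\rho s_k$. Writing $x=\delta_k/s_k$, this equals $s_k(\sqrt{1+x}-1)^2$, while the proxy term is $s_k x^2/4$. The scalar function $g(x)=(\sqrt{1+x}-1)^2-x^2/4$ vanishes to third order at $0$. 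Taylor's theorem with Lagrange remainder gives $|g(x)|\le \tfrac16\sup_{|\xi|\le\hat\rho}|g'''(\xi)|\,|x|^3$, where $g'''$ involves powers $(1+\xi)^{-3/2},(1+\xi)^{-5/2}$. These are bounded via $(1-\hat\rho)^{-p}$, and $\sqrt{1+\hat\rho}$-type factors appear from the $\sqrt{1+x}$ terms. Summing over $k$ and using $|x|\le\|\Delta\Sigma_{ij}\|/m_0$ gives a bound of the form $c\,d\,\tfrac{M_0}{m_0^3}(1+\hat\rho)^{a}(1-\hat\rho)^{-b}\|\Delta\Sigma_{ij}\|^3$. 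This must then be shown to be dominated by $B_W$, which needs $\kappa^5/m_0^5\cdots$ type slack. The very generous powers $M_0^6/m_0^8$ and $(1-\hat\rho)^{-4}$ in $B_W$ suggest the authors instead bounded the integral remainder of a Fréchet/Daleckii--Krein expansion of the matrix square root with crude norm estimates. If the scalar route gives a cleaner bound, I will simply note that it implies the stated one.

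\textbf{Main obstacle.} The hardest part is bookkeeping the constants in the $W_2$ remainder so that they provably sit below $16\sqrt2\,d\,M_0^6(1+\hat\rho)^{3/2}/(m_0^8(1-\hat\rho)^4)$. A mismatch of scaling units could occur, since $B_W$ carries dimension $M_0^6/m_0^8\sim$ (covariance)$^{-2}$ as required for $\|\Delta\Sigma\|^3$ to give a covariance. My first attempt is the scalar Taylor bound, converted to the stated form by repeatedly using $1\le\kappa$, $1\le 1+\hat\rho$ and $1\le(1-\hat\rho)^{-1}$. Only if that leaves a gap in the constant will I reproduce the matrix-square-root integral-remainder argument.
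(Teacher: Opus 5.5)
Your proposal is correct and gives the stated bound with room to spare. For the surrogate remainder you follow the paper's route. For the Wasserstein remainder you take a genuinely different one.

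\textbf{Surrogate remainder.} On $C_{ij}-Q_{ij}$ both arguments use the power series of $\phi$ and $|\Tr X|\le d\|X\|$. The paper first rewrites the term as $\tfrac14\Tr\big((\phi(C_0)-I)C_0\Sigma_i^0C_0\big)$ and uses the cruder estimate $\|\phi(C_0)-I\|\le 3\hat\rho/(1-\hat\rho)$. That is exactly why $B_C$ equals $6\kappa$ times your $d/(8m_0^2(1-\hat\rho))$.

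\textbf{Wasserstein remainder, paper's route.} The paper Taylor-expands $g(\alpha)=\Tr\sqrt{A(\alpha)}$, where $A(\alpha)=(\Sigma_i^0)^{1/2}(\Sigma_i^0+\alpha\Delta\Sigma_{ij})(\Sigma_i^0)^{1/2}$, and writes the remainder as $-\tfrac13 g'''(\xi)$. It then bounds the third Fr\'echet derivative of the square root by a Cauchy contour integral around the enclosure $[(1-\hat\rho)m_0^2,(1+\hat\rho)M_0^2]$, together with $\|E\|\le M_0\|\Delta\Sigma_{ij}\|$. This is the source of $M_0^6/m_0^8$, $(1+\hat\rho)^{3/2}$ and $(1-\hat\rho)^{-4}$. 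Commutativity enters only to identify the quadratic Taylor term with $Q_{ij}$. So the remainder estimate itself is matrix-general and would carry over to non-commuting pairs, with the Bures quadratic form as proxy.

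\textbf{Wasserstein remainder, your route.} Simultaneous diagonalization uses commutativity fully and reduces everything to $h(x)=(\sqrt{1+x}-1)^2-x^2/4$. This is more elementary and much sharper. In fact $h'''(x)=-\tfrac34(1+x)^{-5/2}$ exactly, with no $(1+\xi)^{-3/2}$ or $(1+\hat\rho)$ factors. Hence $|W_{2,ij}^2-Q_{ij}|\le\tfrac{d}{8}M_0m_0^{-3}(1-\hat\rho)^{-5/2}\|\Delta\Sigma_{ij}\|^3$. The slack to $B_W$ is then $(M_0^6/m_0^8)/(M_0/m_0^3)=\kappa^5\ge 1$. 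This is a dimensionless factor, not the ``$\kappa^5/m_0^5$'' you wrote, which your own unit count rules out. So the domination step is immediate.

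\textbf{What the scalar route adds.} It reveals something both proofs hide by passing through $Q_{ij}$. Comparing $C_{ij}$ and $W_{2,ij}^2$ eigenvalue by eigenvalue, the difference is
$s_k\big(\tfrac14x_k\log(1+x_k)-(\sqrt{1+x_k}-1)^2\big)=s_k\big(\tfrac{1}{192}x_k^4+O(x_k^5)\big)$.
The cubic terms cancel, so in the commuting regime the gap is in fact quartic.
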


\begin{proof}
In the local ($\hat{\rho}<1$) commuting regime, $Q_{ij}$ denotes the common quadratic proxy from Theorem~\ref{thm:local-comparison}. 
\[Q_{ij}
=
\frac14
\Tr\!\big(
\Delta\Sigma_{ij}(\Sigma_i^0)^{-1}\Delta\Sigma_{ij}
\big).\]

The discrepancy can therefore be decomposed as $W_{2,ij}^2-C_{ij}
=(W_{2,ij}^2-Q_{ij})-(C_{ij}-Q_{ij})$.
By applying triangle inequality:
\begin{equation}
|C_{ij}-W_{2,ij}^2|
\le
|C_{ij}-Q_{ij}|+|W_{2,ij}^2-Q_{ij}|.
\label{eq:gap-split}
\end{equation}

The first and second terms are respectively given by
\begin{equation}
|C_{ij}-Q_{ij}|
\le
B_C(\Sigma_i^0,\hat{\rho})\,\|\Delta\Sigma_{ij}\|^3.
\label{eq:C-bound-main}
\end{equation}
\begin{equation}
|W_{2,ij}^2-Q_{ij}|
\le
B_W(\Sigma_i^0,\hat{\rho})\,\|\Delta\Sigma_{ij}\|^3.
\label{eq:W-bound-main}
\end{equation}

Combining \eqref{eq:C-bound-main}, \eqref{eq:W-bound-main}, and \eqref{eq:gap-split} proves \eqref{eq:cubic-main}. Full derivations are provided in Appendix~\ref{app:cubic-derivation}.
\end{proof}

Theorem~\ref{thm:cubic-gap} makes explicit how the gap depends on both the perturbation size and the conditioning of the reference covariance. In particular, the bound deteriorates when $\Sigma_i^0$ becomes ill-conditioned ($m_0\to 0$).

\section{Piecewise-Local Transport via Path Splitting}
\label{sec:path-splitting}
The cubic gap bound of Theorem~\ref{thm:cubic-gap} requires both a locality condition and the assumptions stated in that theorem. In challenging regimes the locality condition may be violated, for instance when the covariance displacement is large or when the initial covariance is poorly conditioned. In such situations the local Taylor expansion underlying the bound no longer holds globally.

From a control perspective, path splitting can be viewed as replacing a single global transport policy with a sequence of piecewise-local transport policies, each operating in a regime where the surrogate approximation remains accurate. To recover locality, we therefore subdivide the transport path into a sequence of smaller covariance increments. The idea is to ensure that each substep satisfies the locality requirement of Theorem~\ref{thm:cubic-gap}, so that the theorem can be applied segmentwise whenever its other assumptions are also satisfied. Path splitting addresses the locality requirement by replacing a large covariance displacement with smaller segmentwise increments, thereby enabling piecewise-local analysis whenever the remaining assumptions hold on each segment. Note that path splitting only addresses the locality requirement; it does not remove the commuting assumption required by Theorem~\ref{thm:cubic-gap}.

Let $(\mu^{(N)},\Sigma^{(N)})=(\mu_j^1,\Sigma_j^1)$ and define the uniformly subdivided covariance path
\begin{equation}
\Sigma^{(k)}=\Sigma_i^0+\frac{k}{N}\Delta\Sigma_{ij},
\ \ \
k=0,\dots,N,
\ \ \
\Delta\Sigma_{ij}:=\Sigma_j^1-\Sigma_i^0.
\label{eq:linear-splitting}
\end{equation}
Since the cone of positive definite matrices is convex, each $\Sigma^{(k)}$ is positive definite. The increment on segment $k$ is then
\begin{equation}
\Delta\Sigma^{(k)}=\Sigma^{(k+1)}-\Sigma^{(k)}=\frac{1}{N}\Delta\Sigma_{ij}.
\label{eq:segment-increment}
\end{equation}
For each segment, define the local whitened perturbation
\begin{equation}
\hat{\rho}_k :=
\left\|
(\Sigma^{(k)})^{-1/2}
\Delta\Sigma^{(k)}
(\Sigma^{(k)})^{-1/2}
\right\|.
\label{eq:rhok}
\end{equation}

\begin{proposition}[Local validity under subdivision]
\label{prop:path-subdivision}
Assume that the subdivided path remains in $\mathbb{S}_{++}^d$. If, for every
$k=0,\dots,N-1$, the local perturbation satisfies $\hat{\rho}_k<1$, then the cubic
bound of Theorem~\ref{thm:cubic-gap} applies on each segment. Consequently,
\begin{equation}
\begin{aligned}
&\sum_{k=0}^{N-1}|C_{ij}(\Sigma^{(k)},\Sigma^{(k+1)})-W_{2,ij}^2(\Sigma^{(k)},\Sigma^{(k+1)})|
\le\\&
\frac{\|\Delta\Sigma_{ij}\|^3}{N^3}\sum_{k=0}^{N-1}
\Big(
B_C(\Sigma^{(k)},\hat{\rho}_k)
+
B_W(\Sigma^{(k)},\hat{\rho}_k)
\Big)
 .
\end{aligned}
\label{eq:global-sum}
\end{equation}
If the path is subdivided finely enough, then the segmentwise surrogate error relative to \(W_2^2\) over the entire subdivision is cumulatively controlled by the sum of the local cubic bounds. In this sense, the global error along the subdivided path is controlled through the accumulation of local errors, providing a principled way to extend the surrogate beyond its one-shot local validity regime.
\end{proposition}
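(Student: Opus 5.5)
The plan is to apply Theorem~\ref{thm:cubic-gap} segment by segment and then sum. For each $k=0,\dots,N-1$, the segment from $\mathcal N(\mu^{(k)},\Sigma^{(k)})$ to $\mathcal N(\mu^{(k+1)},\Sigma^{(k+1)})$ is treated as a Gaussian pair in its own right. The reference covariance $\Sigma_i^0$ is replaced by $\Sigma^{(k)}$, and the displacement $\Delta\Sigma_{ij}$ by $\Delta\Sigma^{(k)}$.

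First we check that each segment satisfies the hypotheses of Theorem~\ref{thm:cubic-gap}.
\begin{itemize}
\item \emph{Positive definiteness.} Both endpoints $\Sigma^{(k)}$ and $\Sigma^{(k+1)}$ lie in $\mathbb{S}_{++}^d$. This holds by assumption, and also by convexity of the cone since each $\Sigma^{(k)}$ is a convex combination of $\Sigma_i^0$ and $\Sigma_j^1$.
\item \emph{Locality.} The segment's whitened perturbation size is exactly $\hat\rho_k$ from \eqref{eq:rhok}, and $\hat\rho_k<1$ is assumed.
\item \emph{Commutation.} We need $\Sigma^{(k)}$ to commute with $\Delta\Sigma^{(k)}=\frac1N\Delta\Sigma_{ij}$. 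We will note that if $\Sigma_i^0$ commutes with $\Delta\Sigma_{ij}$, then $\Sigma^{(k)}=\Sigma_i^0+\frac kN\Delta\Sigma_{ij}$ also commutes with $\Delta\Sigma_{ij}$, since $\Delta\Sigma_{ij}$ commutes with itself. So the commuting regime propagates along the linear subdivision. This is what the phrase ``applies on each segment'' requires, and it matches the paper's remark that splitting does not remove, but preserves, the commuting assumption.
\end{itemize}

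Second, we invoke \eqref{eq:cubic-main} on segment $k$. This gives
\[
|C_{ij}(\Sigma^{(k)},\Sigma^{(k+1)})-W_{2,ij}^2(\Sigma^{(k)},\Sigma^{(k+1)})|\le \bigl(B_C(\Sigma^{(k)},\hat\rho_k)+B_W(\Sigma^{(k)},\hat\rho_k)\bigr)\|\Delta\Sigma^{(k)}\|^3 .
\]
Here $B_C,B_W$ are evaluated with $m_0,M_0$ replaced by $\lambda_{\min}(\Sigma^{(k)})$ and $\lambda_{\max}(\Sigma^{(k)})$. Using \eqref{eq:segment-increment} and homogeneity of the operator norm, $\|\Delta\Sigma^{(k)}\|^3=\|\Delta\Sigma_{ij}\|^3/N^3$.

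Third, we sum over $k$ and pull the common factor $\|\Delta\Sigma_{ij}\|^3/N^3$ out of the sum, which yields \eqref{eq:global-sum}. The mean terms cancel within each segment because both costs share the identical term $\|\mu^{(k+1)}-\mu^{(k)}\|^2$ by Theorem~\ref{thm:local-comparison}(1). So they do not enter the gap, regardless of how intermediate means are chosen.

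For the qualitative ``finely enough'' statement, we bound the constants uniformly in $k$. Since $\Sigma^{(k)}$ is a convex combination of $\Sigma_i^0$ and $\Sigma_j^1$, we have $\lambda_{\min}(\Sigma^{(k)})\ge m_*:=\min(\lambda_{\min}(\Sigma_i^0),\lambda_{\min}(\Sigma_j^1))$ and $\lambda_{\max}(\Sigma^{(k)})\le M_*:=\max(\lambda_{\max}(\Sigma_i^0),\lambda_{\max}(\Sigma_j^1))$. It follows that $\hat\rho_k\le \|\Delta\Sigma_{ij}\|/(N m_*)$, so $\hat\rho_k\le 1/2$ for $N$ large. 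Then each summand is bounded by a constant depending only on $d,m_*,M_*$, and the total error is $O(N\cdot N^{-3})=O(N^{-2})$.

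The main obstacle is making sure the segment hypotheses hold, above all commutativity and the verification that $\hat\rho_k<1$ is attained. Once these are checked, the rest is bookkeeping.
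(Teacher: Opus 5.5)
Your proposal is correct and follows the same route as the paper: apply Theorem~\ref{thm:cubic-gap} to each pair $(\Sigma^{(k)},\Sigma^{(k+1)})$ with $\hat\rho_k<1$, use $\|\Delta\Sigma^{(k)}\|^3=\|\Delta\Sigma_{ij}\|^3/N^3$, and sum the segmentwise bounds. Your extra checks are also valid, and they make explicit what the paper's two-line proof leaves implicit: commutativity of $\Sigma_i^0$ with $\Delta\Sigma_{ij}$ carries over to every $\Sigma^{(k)}$ along the linear subdivision, and the uniform constants via $m_*,M_*$ give an $O(N^{-2})$ total.
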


\begin{proof}
Theorem~\ref{thm:cubic-gap} applies to each segment
$(\Sigma^{(k)},\Sigma^{(k+1)})$ whenever $\hat{\rho}_k<1$. Summing the resulting
segmentwise bounds yields \eqref{eq:global-sum}.
\end{proof}

A simple sufficient condition can be obtained using the submultiplicative property of the norm and \eqref{eq:segment-increment}-\eqref{eq:rhok}:
\begin{equation}
\begin{aligned}
\hat{\rho}_k
\le
\frac{1}{N}\,
\|(\Sigma^{(k)})^{-1/2}\|^2
\|\Delta\Sigma_{ij}\|.    
\end{aligned}
\label{eq:rhok-bound}
\end{equation}

Hence, whenever one has a uniform lower bound $\lambda_{\min}(\Sigma^{(k)}) \ge \underline m >0
\ \ \text{for all } k$,
it follows that
\begin{equation}
\hat{\rho}_k \le \frac{\|\Delta\Sigma_{ij}\|}{N\,\underline m}.
\label{eq:rhok-bound-uniform}
\end{equation}
where $m$ denotes a uniform lower bound on the smallest eigenvalues along the subdivided path. Therefore, choosing
\begin{equation}
N>\frac{\|\Delta\Sigma_{ij}\|}{\underline m}
\label{eq:N-condition}
\end{equation}
guarantees $\hat{\rho}_k<1$ for all segments. In practice, this shows that even when the original perturbation is too large
for the one-shot local analysis, a sufficiently fine subdivision restores the
local regime on each segment.

\section{Method B: Exact Gaussian Wasserstein Transport}
\label{sec:methodB}

As a reference construction, we also consider the exact Gaussian
$2$-Wasserstein geometry. In contrast to Method~A, which is based on the
surrogate cost $C_{ij}$, Method~B follows the true displacement interpolation
induced by the optimal Gaussian transport map
\cite{mccann1997,takatsu2011,bhatia2019}.

For the Gaussian pair
$\mathcal{N}(x\mid \mu_i^0,\Sigma_i^0)$ and $\mathcal{N}(x\mid \mu_j^1,\Sigma_j^1)$,
the optimal transport map is affine:
\begin{equation}
\begin{aligned}
T_{ij}(x)
&=
\mu_j^1 + M_{ij}(x-\mu_i^0),\\
M_{ij}
&=
(\Sigma_i^0)^{-1/2}
\Big(
(\Sigma_i^0)^{1/2}\Sigma_j^1(\Sigma_i^0)^{1/2}
\Big)^{1/2}
(\Sigma_i^0)^{-1/2}.
\end{aligned}
\label{eq:ot-map}
\end{equation}
The associated displacement interpolation has linear mean path $\mu_{ij}(t)=(1-t)\mu_i^0+t\mu_j^1$, and covariance geodesic
\begin{equation}
\Sigma_{ij}(t)=A_{ij}(t)\Sigma_i^0A_{ij}(t)^\top,
\ \
A_{ij}(t)=(1-t)I+tM_{ij}.
\label{eq:geo-path}
\end{equation}

The corresponding affine velocity field is
\begin{equation}
v_{ij}(t,x)
=
\dot\mu_{ij}(t)
+
\dot A_{ij}(t)A_{ij}(t)^{-1}(x-\mu_{ij}(t)).
\label{eq:vij-exact}
\end{equation}
This field satisfies the continuity equation and transports the Gaussian
density along the Wasserstein geodesic.

\begin{theorem}[Method~B yields the exact cost]
\label{thm:methodB}
For each Gaussian component pair, the kinetic action induced by
\eqref{eq:geo-path}--\eqref{eq:vij-exact} equals the exact squared Gaussian
Wasserstein distance:
\begin{equation}
C_{ij}^{(\mathrm{B})}
=
W_{2,ij}^2\!\Big(
\mathcal{N}(x\mid \mu_i^0,\Sigma_i^0),
\mathcal{N}(x\mid \mu_j^1,\Sigma_j^1)
\Big).
\label{eq:methodB-exact}
\end{equation}
\end{theorem}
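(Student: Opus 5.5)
We compute the kinetic action directly from the velocity field \eqref{eq:vij-exact}. The action is evaluated along the Gaussian path it transports, with $X_t\sim\mathcal N(\mu_{ij}(t),\Sigma_{ij}(t))$, and the result is then compared with \eqref{eq:W2-gauss}.

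\textbf{Step 1: well-posedness of $A_{ij}(t)$.} Write $S:=(\Sigma_i^0)^{1/2}$. Then
\[
M_{ij}=S^{-1}(S\Sigma_j^1S)^{1/2}S^{-1}
\]
is symmetric positive definite, because it is congruent to a positive definite matrix. Hence $A_{ij}(t)=(1-t)I+tM_{ij}$ is a convex combination of SPD matrices, so it is SPD and invertible for every $t\in[0,1]$. Moreover $\dot A_{ij}=M_{ij}-I$ is constant. We also check the endpoint: $M_{ij}\Sigma_i^0M_{ij}=\Sigma_j^1$, since
\[
S^{-1}(S\Sigma_j^1S)^{1/2}(S\Sigma_j^1S)^{1/2}S^{-1}=\Sigma_j^1 .
\]
So $\Sigma_{ij}(1)=\Sigma_j^1$.

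\textbf{Step 2: continuity equation.} Take $X_0\sim\mathcal N(\mu_i^0,\Sigma_i^0)$ and set
\[
X_t=\mu_{ij}(t)+A_{ij}(t)(X_0-\mu_i^0).
\]
Then $X_t$ has law $\mathcal N(\mu_{ij}(t),\Sigma_{ij}(t))$. Differentiating gives
\[
\dot X_t=\dot\mu_{ij}+\dot A_{ij}(X_0-\mu_i^0)=\dot\mu_{ij}+\dot A_{ij}A_{ij}^{-1}(X_t-\mu_{ij}(t))=v_{ij}(t,X_t).
\]
Hence the pushforward density solves the continuity equation with the field $v_{ij}$, as claimed in the text. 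This is consistent with Lemma~\ref{lem:unique-affine}.

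\textbf{Step 3: the action.} Using the representation from Step 2,
\[
\mathbb E\|v_{ij}(t,X_t)\|^2=\mathbb E\|\Delta\mu_{ij}+(M_{ij}-I)(X_0-\mu_i^0)\|^2=\|\Delta\mu_{ij}\|^2+\Tr\big((M_{ij}-I)\Sigma_i^0(M_{ij}-I)\big).
\]
The cross term vanishes because $X_0-\mu_i^0$ has zero mean. This expression does not depend on $t$, so integrating over $[0,1]$ leaves it unchanged. We then expand
\[
\Tr\big((M-I)\Sigma_i^0(M-I)\big)=\Tr(M\Sigma_i^0M)-2\Tr(M\Sigma_i^0)+\Tr(\Sigma_i^0).
\]
The first term equals $\Tr(\Sigma_j^1)$ by Step 1. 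For the middle term, cyclicity gives
\[
\Tr(M\Sigma_i^0)=\Tr\big(S^{-1}(S\Sigma_j^1S)^{1/2}S\big)=\Tr\big((S\Sigma_j^1S)^{1/2}\big).
\]
Substituting these yields exactly \eqref{eq:W2-gauss}.

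\textbf{Main obstacle.} The delicate point is the final identification. We have shown that the action of this particular flow equals the closed-form value \eqref{eq:W2-gauss}. That value is $W_2^2$ by the cited Gaussian formula, and it is also the minimal action by Benamou--Brenier. Beyond citing that formula, no minimality argument is needed, because the statement only asserts equality of the action with the known closed form. The only subtle step is the symmetry and positivity of $M_{ij}$, which secures both the invertibility of $A_{ij}(t)$ and the trace identities used above.
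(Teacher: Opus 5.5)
Your proof is correct and is essentially the paper's argument. You reduce the integrand to the time-independent quantity $\|\Delta\mu_{ij}\|^2+\Tr\bigl((M_{ij}-I)\Sigma_i^0(M_{ij}-I)\bigr)$ via the Lagrangian representation $X_t=\mu_{ij}(t)+A_{ij}(t)(X_0-\mu_i^0)$, where the paper uses $A^{-1}\Sigma_tA^{-\top}=\Sigma_i^0$; from there you use the same identities $M_{ij}\Sigma_i^0M_{ij}=\Sigma_j^1$ and $\Tr(M_{ij}\Sigma_i^0)=\Tr\bigl(((\Sigma_i^0)^{1/2}\Sigma_j^1(\Sigma_i^0)^{1/2})^{1/2}\bigr)$, and your extra checks (that $M_{ij}$ is SPD, so $A_{ij}(t)$ is invertible, and that $v_{ij}$ solves the continuity equation) fill in points the paper only asserts.
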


\begin{proof}
The proof is provided in
Appendix~\ref{app:methodB-proof}.
\end{proof}

\section{Numerical Experiments}

\subsection{Accuracy Across Gaussian Regimes}

To assess the accuracy of the surrogate cost $C$, we compare it with the exact Wasserstein cost $W_2^2$ over the representative Gaussian scenarios reported in Table~\ref{tab:full11}, ranging from commuting cases to strongly non-commuting and ill-conditioned regimes. We quantify noncommutativity through the normalized commutator
\[
\mathrm{comm}:=
\frac{\|[\Sigma_0,\Delta\Sigma]\|_F}{\|\Sigma_0\|_F\,\|\Delta\Sigma\|_F}
=
\frac{\|\Sigma_0\Delta\Sigma-\Delta\Sigma\Sigma_0\|_F}{\|\Sigma_0\|_F\,\|\Delta\Sigma\|_F}.
\]

Three regimes emerge from the table. First, in commuting or nearly commuting settings, $C$ and $W_2^2$ agree closely, in line with the second-order comparison of Section~\ref{sec:comparison}. Second, for moderately non-commuting covariances, the gap becomes positive but remains controlled. Third, in strongly nonlocal or ill-conditioned regimes, the discrepancy can grow substantially. The \textbf{near-SPD-boundary} example is the clearest illustration: large covariance displacement and poor conditioning produce a dramatic increase in the gap.

The table also shows that the gap is primarily driven by covariance geometry rather than mean separation. Indeed, the \textbf{2D non-comm} and \textbf{Non-comm + mean shift} cases have essentially the same gap, even though their total transport costs differ greatly. Overall, the numerics support the theoretical regime picture: the dominant factors are the covariance displacement $\|\Delta\Sigma\|$ and the conditioning through the extremal eigenvalues, while the ambient dimension appears to play a secondary role. This is consistent with the local cubic analysis of Section~\ref{sec:cubic-gap} and with the piecewise-local interpretation of Section~\ref{sec:path-splitting}.
\begin{table*}[t]
\centering
\caption{Exact costs and approximation errors across $10$ Gaussian scenarios.}
\label{tab:full11}
\scriptsize
\setlength{\tabcolsep}{4pt}
\begin{tabular}{lcccccccccc}
\toprule
Name & dim & $\hat{\rho}$ & $\kappa$ & $\|\Delta\Sigma\|$ & comm & $W_2^2$ & $C$ & gap & $\left|W_{2}^2 - (\|\Delta\mu\|^2+Q)\right|$ & $\left|C - (\|\Delta\mu\|^2+Q)\right|$ \\
\midrule
1D (always comm) & 1 & 3.00 & 1.00 & 3.00 & 0.00 & 5.00 & 5.04 & 0.04 & 1.25 & 1.21 \\
2D isotropic $\Sigma_0=I$ & 2 & 1.00 & 1.00 & 1.00 & 0.00 & 2.81 & 2.81 & 0.00 & 0.03 & 0.03 \\
2D comm (diag) & 2 & 1.00 & 9.00 & 5.00 & 0.00 & 3.17 & 3.19 & 0.02 & 0.23 & 0.24 \\
2D non-comm & 2 & 1.68 & 9.00 & 5.76 & 0.19 & 3.88 & 4.11 & 0.23 & 0.08 & 0.31 \\
Non-comm + mean shift & 2 & 1.68 & 9.00 & 5.76 & 0.19 & 370.88 & 371.11 & 0.23 & 0.08 & 0.31 \\
Near-SPD boundary & 2 & 93.05 & 1000.00 & 333.31 & 0.91 & 98.52 & 2363.09 & 2264.57 & 21791.60 & 19527.03 \\
$\kappa=10^6$ stress test & 5 & 0.98 & 1000000.00 & 1.28 & 0.43 & 0.99 & 1.79 & 0.81 & 0.35 & 1.16 \\
10D Toeplitz model & 10 & 24.57 & 75.17 & 5.25 & 0.20 & 7.06 & 9.23 & 2.16 & 21.04 & 18.87 \\
20D factor model & 20 & 100.38 & 101.38 & 20.39 & 0.06 & 41.11 & 60.10 & 18.99 & 548.34 & 529.35 \\
30D Wishart model & 30 & 6.10 & 12.49 & 2.10 & 0.09 & 6.71 & 7.91 & 1.19 & 2.77 & 1.58 \\
\bottomrule
\end{tabular}
\end{table*}

\subsection{Computational Complexity and Runtime Scaling}

Beyond approximation accuracy, a main motivation for introducing the surrogate cost $C$ is computational efficiency. To quantify this advantage, we study transport between two Gaussian mixture models (GMMs) with two source and two target components ($K_0=K_1=2$), while varying the ambient dimension from $d=2$ to $d=300$.

We consider three representative regimes: \textbf{1} (commuting mild), \textbf{2} (non-commuting balanced), and \textbf{3} (anisotropic stretched). Scenario~1 corresponds to a near-commuting setting where second-order agreement is expected to dominate, Scenario~2 captures moderate non-commutativity, and Scenario~3 represents a higher-anisotropy stress regime in which higher-order effects become more visible. Across dimensions, the main geometric indicators --- $\|\Delta\Sigma\|$, conditioning $\kappa$, non-commutativity level $\mathrm{comm}$, and the ratio parameter $\rho$ --- remain within comparable ranges; their summary statistics are reported in Table~\ref{tab:runtime_scenarios_summary}. For each dimension, these structural quantities are first computed as max-over-pairs and then averaged across dimensions.

For each dimension, we measure the runtime of the transport-construction pipeline: (i) pairwise cost computation, (ii) entropic OT (Sinkhorn) for the mixture coupling, (iii) construction of the pairwise velocity fields $v_{ij}(t,x)$, and (iv) evaluation of the global velocity field $u(t,x)$ through responsibility weights. The ODE integration step is intentionally excluded so that the reported runtime isolates the intrinsic cost of transport construction. The resulting scaling is shown in Fig.~\ref{fig:runtime-scaling}. We use $\varepsilon_{\mathrm{OT}}=5\times 10^{-2}$, a maximum of $5000$ Sinkhorn iterations, and a stopping tolerance of $10^{-9}$. All computations are performed on CPU without GPU acceleration.

\begin{figure}[t]
\centering
\includegraphics[width=0.9\linewidth]{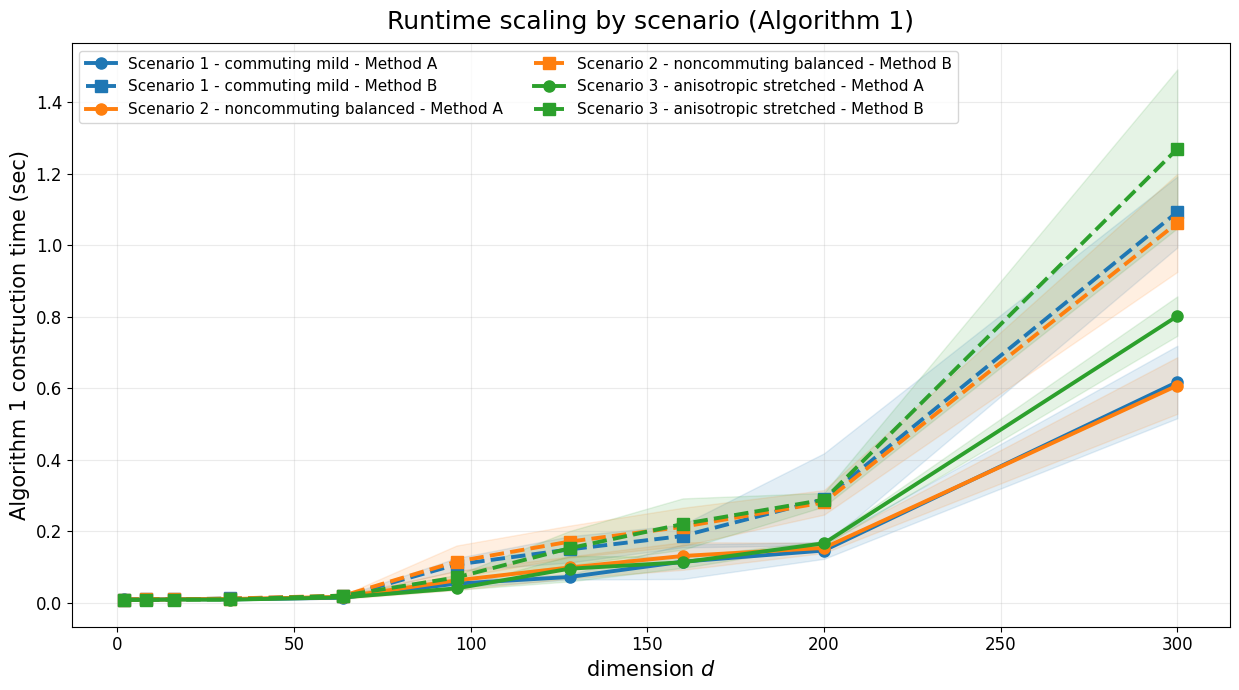}
\caption{Runtime scaling of the transport-construction pipeline for the surrogate and exact Wasserstein methods across three Gaussian regimes.}
\label{fig:runtime-scaling}
\end{figure}

\begin{table}[t]
\centering
\caption{Geometric indicators and mixture-level cost discrepancy.}
\label{tab:runtime_scenarios_summary}
\scriptsize
\setlength{\tabcolsep}{4pt}
\begin{tabular}{lccccc}
\toprule
Scenario & $\hat{\rho}$ & $\kappa$ & $\|\Delta\Sigma\|$ & comm & mean gap \\
\midrule
1 & 0.57 & 4.00  & 3.05  & 0.00 & 0.20 \\
2 & 0.47 & 10.00 & 4.05  & 0.19 & 0.02 \\
3 & 3.12 & 43.64 & 10.57 & 0.34 & 0.85 \\
\bottomrule
\end{tabular}
\end{table}
As shown in Fig.~\ref{fig:runtime-scaling}, both methods have similar cost in low dimension, but their runtimes separate as $d$ increases. Beyond roughly $d\approx 200$, the total construction time increases substantially in all three scenarios. This growth is consistently more pronounced for the Wasserstein-based method, whereas the surrogate pipeline remains systematically cheaper across the full tested range. This gap is explained by the underlying matrix operations. The surrogate construction relies on closed-form Gaussian interpolation together with matrix inversions and linear covariance operations. By contrast, the exact Wasserstein construction requires matrix square roots and Bures-type covariance transforms, whose cost becomes more significant in higher dimension.

Importantly, this computational advantage is not obtained at the price of a large loss in fidelity. Table~\ref{tab:runtime_scenarios_summary} shows that the mean absolute discrepancy between the mixture-level costs induced by $C$ and by $W_2^2$ remains moderate across all three regimes, including the anisotropic and non-commuting cases. Overall, these results indicate a favorable trade-off: the surrogate cost $C$ delivers a clear runtime advantage while remaining close to the exact Wasserstein cost at the mixture level, attractive in repeated-transport settings such as mixture-based generative modeling and iterative transport procedures.

\section{PRACTICAL REGIME MAP}

This paper considers two transport constructions only: Method~A, which uses the closed-form surrogate cost $C$, and Method~B, which follows the exact Gaussian Wasserstein geodesic. Method~A is appropriate in local and moderately conditioned regimes, where the covariance displacement is small enough for the surrogate approximation to remain accurate. Method~B is the exact alternative and is preferable in highly nonlocal, ill-conditioned, or near-boundary regimes.

Path splitting is not a separate transport construction. It is used only as a proof device for Method~A: when the global perturbation is too large for the local analysis, the covariance path can be subdivided into smaller segments so that the local bounds remain applicable on each piece.





\section{Conclusion}



We proposed a training-free method for Gaussian mixture transport based on affine Gaussian velocity fields and a closed-form surrogate cost $C$. 
The theory shows that $C$ approximates the Gaussian Bures--Wasserstein cost $W_2^2$ well in local and well-conditioned regimes, with larger errors mainly appearing for large covariance changes or ill-conditioned cases. 
Experiments confirm that the method is accurate in simple regimes and significantly faster than the exact Wasserstein construction, especially in high dimensions. 
A main future direction is to study the effect of the entropic OT parameter $\varepsilon_{\mathrm{OT}}$, which controls the trade-off between transport fidelity and smoothness.
\section*{Acknowledgment}
This work was supported by the European Union’s Horizon Europe research and innovation programme under the
Marie Skłodowska-Curie COFUND grant agreement No 101127936 (DeMythif.AI)
and Université Paris-Saclay through the PhD funding program. LLMs were used to help in editing the paper.

\appendix

\subsection{Proof of Lemma~\ref{lem:unique-affine}}
\label{app:proof-affine-field}

Let $\rho_t(x)=\mathcal{N}(x\mid m(t),\Sigma(t))$,
$y:=x-m(t)$
and consider an affine field, $
v(t,x)=a(t)+B(t)y.
$
Dividing the continuity equation
$\partial_t \rho_t+\nabla\!\cdot(\rho_t v)=0
$
by $\rho_t$ gives
\begin{equation}
\partial_t \log \rho_t+\nabla\!\cdot v+\langle \nabla \log \rho_t,v\rangle=0.
\label{eq:app-ce-log}
\end{equation}

For the Gaussian log-density,
$\log\rho_t(x)
=
-\frac d2\log(2\pi)
-\frac12\log\det\Sigma(t)
-\frac12 y^\top \Sigma(t)^{-1} y,$
based on $\nabla(x^TAx)=(A+A^T)x$, we have
\begin{equation}
\nabla \log \rho_t(x)=-\Sigma(t)^{-1}y,
\qquad
\nabla\!\cdot v(t,x)=\Tr(B(t)),
\label{eq:app-grad-div}
\end{equation}
\begin{equation}
\partial_t \log \rho_t(x)
=
-\frac12 \Tr\!\bigl(\Sigma^{-1}\dot\Sigma\bigr)
+\dot m^\top \Sigma^{-1} y
+\frac12 y^\top \Sigma^{-1}\dot\Sigma\,\Sigma^{-1} y.
\label{eq:app-dt-logrho-short}
\end{equation}

Substituting \eqref{eq:app-grad-div}--\eqref{eq:app-dt-logrho-short} into \eqref{eq:app-ce-log} yields
\begin{equation}
\begin{aligned}
-\frac12\Tr(\Sigma^{-1}\dot\Sigma)
+&\Tr(B)
+
(\dot m-a)^\top \Sigma^{-1} y
+\\&
y^\top\!\left(
\frac12 \Sigma^{-1}\dot\Sigma\,\Sigma^{-1}
-\Sigma^{-1}B
\right)y
=0  
\end{aligned}
\end{equation}
for all $y\in\mathbb{R}^d$. Therefore, the linear and quadratic terms must vanish, by selecting $a=\dot m$ and $B=\frac12\,\dot\Sigma\,\Sigma^{-1}$.
With this choice, $\Tr(B)=\frac12\Tr(\dot\Sigma\,\Sigma^{-1})
=\frac12\Tr(\Sigma^{-1}\dot\Sigma),$
so the constant term also cancels. Hence $v(t,x)=\dot m(t)+\frac12\,\dot\Sigma(t)\Sigma(t)^{-1}(x-m(t))
$
satisfies the continuity equation.
\hfill $\square$

\subsection{Proof of Lemma~\ref{lem:Cij-closed}}
\label{app:Cij-proof}

For an affine field $v(t,x)=a(t)+B(t)(x-\mu_{ij}(t))$
and $X\sim\mathcal{N}(\mu_{ij}(t),\Sigma_{ij}(t))$, we have
\begin{equation}
\mathbb{E}\|v(t,X)\|^2
=
\|a(t)\|^2
+
\Tr\!\big(B(t)\Sigma_{ij}(t)B(t)^\top\big).
\label{eq:app-energy}
\end{equation}

Substituting the affine velocity field using \eqref{eq:parameter_affine_field},
yields
\begin{equation}
\mathbb{E}\|v_{ij}(t,X)\|^2
=
\|\dot\mu_{ij}(t)\|^2
+
\frac14
\Tr\!\big(
\dot\Sigma_{ij}(t)\Sigma_{ij}(t)^{-1}\dot\Sigma_{ij}(t)
\big).
\label{eq:app-energy2}
\end{equation}

Under linear interpolation, it holds that $\dot\mu_{ij}(t)=\mu_j^1-\mu_i^0, 
\dot\Sigma_{ij}(t)=\Delta\Sigma_{ij}$, hence
\begin{equation}
C_{ij}
=
\|\mu_j^1-\mu_i^0\|^2
+
\frac14
\int_0^1
\Tr\!\Big(
\Delta\Sigma_{ij}\Sigma_{ij}(t)^{-1}\Delta\Sigma_{ij}
\Big)dt.
\label{eq:app-Cij}
\end{equation}

Let $C_0=(\Sigma_i^0)^{-1/2}\Delta\Sigma_{ij}(\Sigma_i^0)^{-1/2}.$
Then $\Sigma_{ij}(t)
=
(\Sigma_i^0)^{1/2}(I+tC_0)(\Sigma_i^0)^{1/2}$,
and $\Sigma_{ij}(t)^{-1}
=
(\Sigma_i^0)^{-1/2}(I+tC_0)^{-1}(\Sigma_i^0)^{-1/2}.$

Therefore,
\[
\int_0^1 \Sigma_{ij}(t)^{-1}dt
=
(\Sigma_i^0)^{-1/2}
\left[
\int_0^1 (I+tC_0)^{-1}dt
\right]
(\Sigma_i^0)^{-1/2}.
\]

By spectral calculus for symmetric matrices,
\[
\int_0^1 (I+tC_0)^{-1}dt
=
\phi(C_0),
\qquad
\phi(z)=\frac{\log(1+z)}{z}.
\]

Substituting into \eqref{eq:app-Cij} yields \eqref{eq:Cij-closed}.
\hfill $\square$

\subsection{Local expansions for $C$ and $W_2^2$}
\label{app:local-expansions}

We summarize the compact derivations used in
Theorem~\ref{thm:local-comparison}.

\subsubsection{Expansion of the surrogate cost}

Recall $C_{ij}$ \eqref{eq:Cij-closed} and then, using $\Delta\Sigma_{ij}=(\Sigma_i^0)^{1/2}C_0(\Sigma_i^0)^{1/2}$, it holds that
\[
\begin{aligned}
&(\Sigma_i^0)^{-1/2}
\Delta\Sigma_{ij} \Delta\Sigma_{ij} (\Sigma_i^0)^{-1/2}=\\
&(\Sigma_i^0)^{-1/2}((\Sigma_i^0)^{1/2}C_0(\Sigma_i^0)^{1/2})((\Sigma_i^0)^{1/2}C_0(\Sigma_i^0)^{1/2})(\Sigma_i^0)^{-1/2}\\
&=C_0 \Sigma_i^0 C_0
\end{aligned}
\]
Plugging into the trace and using cyclicity,
\begin{equation}
\begin{aligned}
&\Tr\!\Big(
(\Sigma_i^0)^{-1/2}\phi(C_0)(\Sigma_i^0)^{-1/2}(\Delta\Sigma_{ij})^2
\Big)\\
&= \Tr\!\Big(\phi(C_0)C_0\Sigma_i^0 C_0\Big)  
\end{aligned}
\label{eq:simple_c}
\end{equation}

For $\|C_0\|<1$, we have $\phi(C_0)=I-\tfrac12 C_0+O(\|C_0\|^2).$

Substituting $\phi(C_0)$ in \eqref{eq:simple_c} gives
\begin{equation}
\begin{aligned}
C_{ij}
&=
\|\Delta\mu_{ij}\|^2
+
\frac14
\Tr\!\big(C_0\Sigma_i^0 C_0\big)
+
O(\|C_0\|^3).
\end{aligned}
\label{eq:C-expand-app}
\end{equation}

Using \eqref{eq:rho-def}
and cyclicity of the trace,
\begin{equation}
\Tr(C_0\Sigma_i^0 C_0)
=
\Tr(\Delta\Sigma_{ij}\,(\Sigma_i^0)^{-1}\Delta\Sigma_{ij}).
\label{eq:trace-rewrite-app}
\end{equation}

Combining \eqref{eq:C-expand-app} and
\eqref{eq:trace-rewrite-app} yields
\eqref{eq:C-second-order}.
\hfill $\square$

\subsubsection{Expansion of the Wasserstein cost in the commuting case}

Assume now that $\Sigma_i^0$ and $\Delta\Sigma_{ij}$ commute.
Since both matrices are symmetric, they are simultaneously
diagonalizable. 
Using $\Sigma_j^1
=
\Sigma_i^0+\Delta\Sigma_{ij}
=
(\Sigma_i^0)^{1/2}(I+C_0)(\Sigma_i^0)^{1/2}$, we can rewrite the square root part of equation \eqref{eq:W2-gauss} as follows. 
\begin{equation}
\begin{aligned}
((\Sigma_i^0)&^{1/2}\Sigma_j^1(\Sigma_i^0)^{1/2})^{1/2}
= (\Sigma_i^0(I+C_0)\Sigma_i^0)^{1/2}\\&=(\Sigma_i^0)^{1/2}(I+C_0)^{1/2}(\Sigma_i^0)^{1/2}
\label{eq:sqrt-commuting}
\end{aligned}
\end{equation}
Taking traces gives $\Tr\!\Big(((\Sigma_i^0)^{1/2}\Sigma_j^1(\Sigma_i^0)^{1/2})^{1/2}\Big)=\Tr\!\Big(\Sigma_i^0(I+C_0)^{1/2}\Big)$.
Then using the scalar/matrix expansion
$(I+C_0)^{1/2}
=
I+\tfrac12 C_0-\tfrac18 C_0^2+O(\|C_0\|^3)$,
we have
\begin{equation}
\begin{aligned}
\Tr\!\Big(((\Sigma_i^0)&^{1/2}\Sigma_j^1(\Sigma_i^0)^{1/2})^{1/2}\Big)=\Tr\!\Big(\Sigma_i^0\Big)+\frac12\Tr\!\Big(\Sigma_i^0 C_0\Big)-\\&\frac18\Tr\!\Big(\Sigma_i^0 C_0^2\Big)+O(\|C_0\|^3),
\label{"eq:TEW22"}
\end{aligned}
\end{equation}

Using \eqref{"eq:TEW22"} and considering $\Tr(\Sigma_j^1)
=\Tr(\Sigma_i^0)+\Tr(\Delta\Sigma_{ij})
=\Tr(\Sigma_i^0)+\Tr(\Sigma_i^0 C_0)$
together, the constant and linear terms cancel in
\eqref{eq:W2-gauss}, and we obtain
\begin{equation}
W_{2,ij}^2
=
\|\Delta\mu_{ij}\|^2
+
\frac14\Tr(\Sigma_i^0 C_0^2)
+
O(\|C_0\|^3).
\label{eq:W2-expand-app}
\end{equation}

Finally, $\Tr(\Sigma_i^0 C_0^2)
=
\Tr(\Delta\Sigma_{ij}\,(\Sigma_i^0)^{-1}\Delta\Sigma_{ij}),$
which proves \eqref{eq:W2-second-order}.
\hfill $\square$
\subsection{Derivation of the Cubic Gap Bound}
\label{app:cubic-derivation}

This appendix provides the main steps behind Theorem~\ref{thm:cubic-gap}. We bound the two remainders $C_{ij}-Q_{ij}$ and $W_{2,ij}^2-Q_{ij}$.
separately.

\subsubsection{Bound for the surrogate remainder}

From the closed form of $C_{ij}$ (using \eqref{eq:simple_c} and \eqref{eq:C-expand-app}), its covariance contribution can be written as
\begin{equation}
C_{\mathrm{cov}}-Q_{ij}
=
\frac14
\Tr\!\big((\phi(C_0)-I)C_0\Sigma_i^0 C_0\big),
\label{eq:CminusQ-app}
\end{equation}
For $\hat{\rho}=\|C_0\|<1$, the power-series expansion of
\[
\phi(C_0)=\frac{\log(I+C_0)}{C_0}=\sum_{k=0}^\infty \frac{(-1)^k}{k+1}C_0^k
\]
implies 
\begin{equation}
\|\phi(C_0)-I\|
\le \sum_{k=1}^\infty \frac{\hat{\rho}^k}{k+1}\le
\frac{3\hat{\rho}}{1-\hat{\rho}}.
\label{eq:phi-bound-app}
\end{equation}
Therefore, using $|\Tr(A)|\le d\|A\|$ and submultiplicativity,
\begin{equation}
\begin{aligned}
|C_{\mathrm{cov}}-Q_{ij}|
&\le
\frac{d}{4}\,
\|\phi(C_0)-I\|\,\|C_0\|^2\,\|\Sigma_i^0\| \\
&\le
\frac{d}{4}\,
\frac{3\hat{\rho}}{1-\hat{\rho}}\,\hat{\rho}^2\,M_0.
\end{aligned}
\label{eq:Cbound-step-app}
\end{equation}
Finally, $\|C_0\|=\|(\Sigma_i^0)^{-1/2}\Delta\Sigma_{ij}(\Sigma_i^0)^{-1/2}\|
\le
\|(\Sigma_i^0)^{-1/2}\|^2\,\|\Delta\Sigma_{ij}\|
=
\frac{\|\Delta\Sigma_{ij}\|}{m_0}$. So
\begin{equation}
|C_{ij}-Q_{ij}|
\le
\frac{3d}{4}
\frac{M_0}{m_0^3(1-\hat{\rho})}
\|\Delta\Sigma_{ij}\|^3.
\label{eq:C-final-app}
\end{equation}
This gives the constant $B_C(\Sigma_i^0,\hat{\rho})$.
\hfill $\square$

\subsubsection{Bound for the Wasserstein remainder}

Define the segment $\Sigma(\alpha)=\Sigma_i^0+\alpha\Delta\Sigma_{ij},
\ \ \alpha\in[0,1]$
\label{eq:Sigma-alpha-app} and set
\begin{equation}
A(\alpha)
=
(\Sigma_i^0)^{1/2}\Sigma(\alpha)(\Sigma_i^0)^{1/2},
\ \
g(\alpha)=\Tr\!\big(\sqrt{A(\alpha)}\big).
\label{eq:A-g-app}
\end{equation}
Then the covariance part of the Wasserstein cost is
\begin{equation}
W_{2,\mathrm{cov}}^2(\alpha)
=
\Tr(\Sigma_i^0)+\Tr(\Sigma(\alpha))-2g(\alpha).
\label{eq:Wcov-alpha-app}
\end{equation}
Since $\Tr(\Sigma(\alpha))$ is affine in $\alpha$, all terms of order three and higher come from $g(\alpha)$. Taylor's theorem therefore gives
\begin{equation}
W_{2,\mathrm{cov}}^2-Q_{ij}
=
-\frac13 g^{(3)}(\xi)
\label{eq:W-remainder-app}
\end{equation}
for some $\xi\in(0,1)$.

To bound $g^{(3)}$, we use the Fr\'echet derivatives of the matrix
square root. The first derivative admits the resolvent
representation \cite{Higham2008}
\begin{equation}
D\sqrt{A}[E]
=
\frac{1}{2\pi i}
\oint_\Gamma
z^{1/2}(zI-A)^{-1}E(zI-A)^{-1}\,dz,
\label{eq:resolvent-app}
\end{equation}
and higher derivatives are obtained by repeated differentiation.
Applying this to $A(\alpha)$ and using the spectral enclosure of
the path yields a uniform bound on the third derivative
$D^3\sqrt{A(\alpha)}$ over $\alpha\in[0,1]$.

Choose $\Gamma$ to be the circle centered at $c=(a+b)/2$ with radius $R=b/2$.
This circle encloses $[a,b]$. Its distance to the interval is at least $\delta=a/2$, hence
$\|(zI-A)^{-1}\|\le 1/\delta = 2/a$ for all $z\in\Gamma$.
Also $|z|\le c+R = (a+b)/2 + b/2 \le b + a/2 \le 2b$; a coarse but simple bound is $|z|\le 2b$,
hence $|f(z)|=|\sqrt{z}|\le \sqrt{|z|}\le \sqrt{2b}.$
The contour length is $|\Gamma|=2\pi R=\pi b$.
Thus
\begin{equation}
\begin{aligned}
&\|D^3 f(A)\|
\le \frac{3!}{2\pi}\int_\Gamma|f(z)|\|(zI-A)^{-1}\|^4|dz|\\
&\le\frac{6}{2\pi}\,|\Gamma|\,\sup_{z\in\Gamma}|f(z)|\,\sup_{z\in\Gamma}\|(zI-A)^{-1}\|^4\\
&\le\frac{6}{2\pi}\,(\pi b)\,(\sqrt{2b})\,(2/a)^4.
\label{eq:apendix_int}
\end{aligned}
\end{equation}

This simplifies to $\|D^3 f(A)\|\le 48\sqrt{2}\, b^{3/2}/a^4$.

It remains to identify a uniform spectral interval for $A(\alpha)$. Since \eqref{eq:rho-def} we have $(1-\alpha\hat{\rho})I\preceq I+\alpha C_0\preceq(1+\alpha\hat{\rho})I$ and multiplying by $\Sigma_i^0$ on both sides
\begin{equation}
\begin{aligned}
&(1-\alpha\hat{\rho})(\Sigma_i^0)^2\le (\Sigma_i^0)^2+\alpha(\Sigma_i^0)^{1/2}\Delta\Sigma_{ij}(\Sigma_i^0)^{1/2}\\
&\le (1+\alpha\hat{\rho})(\Sigma_i^0)^2\\
&\rightarrow(1-\alpha\hat{\rho})(\Sigma_i^0)^2\le(\Sigma_i^0)^{1/2}(\Sigma_i^0+\alpha\Delta\Sigma_{ij})(\Sigma_i^0)^{1/2}\\
&\le (1+\alpha\hat{\rho})(\Sigma_i^0)^2\\
&\rightarrow(1-\alpha\hat{\rho})(\Sigma_i^0)^2\le A(\alpha)\le (1+\alpha\hat{\rho})(\Sigma_i^0)^2.    
\end{aligned}    
\end{equation}

Seeing that $(\Sigma_i^0)^2$ and $A(\alpha)$ are symmetric, by taking eigenvalues minima/maxima yields
\begin{equation}
\begin{aligned}
&a=(1-\alpha\hat{\rho})\lambda_{\min}((\Sigma_i^0)^2)\le \lambda_{\min}(A(\alpha))\le
\\&\lambda_{\max}(A(\alpha))\le (1+\alpha\hat{\rho})\lambda_{\max}((\Sigma_i^0)^2)=b.
\label{eq:pectral_enclosure}
\end{aligned}
\end{equation}
uniformly for $\alpha\in[0,1]$. Also, with $E=(\Sigma_i^0)^{1/2}\Delta\Sigma_{ij}(\Sigma_i^0)^{1/2}$,
submultiplicativity gives
\begin{equation}
\|E\|
\le
\|(\Sigma_i^0)^{1/2}\|^2\,\|\Delta\Sigma_{ij}\|
=
M_0\,\|\Delta\Sigma_{ij}\|.
\label{eq:E-bound-app}
\end{equation}

Combining \eqref{eq:apendix_int}, \eqref{eq:pectral_enclosure}, and \eqref{eq:E-bound-app}, we obtain
\begin{equation}
|g^{(3)}(\alpha)|
\le
48\sqrt{2}\,d\,
\frac{M_0^6(1+\hat{\rho})^{3/2}}{m_0^8(1-\hat{\rho})^4}
\|\Delta\Sigma_{ij}\|^3.
\label{eq:g3-bound-app}
\end{equation}
Substituting \eqref{eq:g3-bound-app} into \eqref{eq:W-remainder-app} yields
\begin{equation}
|W_{2,ij}^2-Q_{ij}|
\le
16\sqrt{2}\,d\,
\frac{M_0^6(1+\hat{\rho})^{3/2}}{m_0^8(1-\hat{\rho})^4}
\|\Delta\Sigma_{ij}\|^3,
\label{eq:W-final-app}
\end{equation}
which gives the constant $B_W(\Sigma_i^0,\hat{\rho})$.
\hfill $\square$

\subsection{Exact kinetic action of Method B}
\label{app:methodB-proof}

In this appendix we show that the kinetic action associated
with Method~B equals the Gaussian Wasserstein cost.

Let the optimal Gaussian map, the geodesic covariance path,
and the associated velocity field be defined in
\eqref{eq:ot-map}, \eqref{eq:geo-path}, and \eqref{eq:vij-exact}. For simplicity we omit the pair indices (i, j).
Let $X_t\sim\mathcal{N}(\mu_t,\Sigma_t)$ and $v(t,x)=\dot\mu_t+\dot A(t)A(t)^{-1}(x-\mu_t)$.
Using \eqref{eq:app-energy} we have
\begin{equation}
\mathbb{E}\|v(t,X_t)\|^2
=
\|\dot\mu_t\|^2
+
\operatorname{Tr}
\!\left(
\dot A(t)A(t)^{-1}\Sigma_tA(t)^{-T}\dot A(t)^\top
\right).
\label{eq:app-energy1}
\end{equation}

Since $\Sigma_t=A(t)\Sigma_0A(t)^\top$, we obtain $A(t)^{-1}\Sigma_tA(t)^{-T}=\Sigma_0$,
and therefore $\mathbb{E}\|v(t,X_t)\|^2
=\|\dot\mu_t\|^2
+\operatorname{Tr}\!\big(\dot A(t)\Sigma_0\dot A(t)^\top\big)$.

Because $\dot\mu_t=\mu_1-\mu_0$ and $\dot A(t)=M-I$ are
constant in $t$, the integrand does not depend on time.
Hence
\begin{equation}
\int_0^1
\mathbb{E}\|v(t,X_t)\|^2\,dt
=
\|\mu_1-\mu_0\|^2
+
\operatorname{Tr}\!\big((M-I)\Sigma_0(M-I)^\top\big)
\label{eq:app-energy3}
\end{equation}

Expanding the trace term yields $\operatorname{Tr}\!\big((M-I)\Sigma_0(M-I)^\top\big)
=\operatorname{Tr}(M\Sigma_0M^\top)
+\operatorname{Tr}(\Sigma_0)
-2\operatorname{Tr}(M\Sigma_0)$.

From the definition of the optimal Gaussian map, it follows that $M\Sigma_0M^\top=\Sigma_1$,
and $\operatorname{Tr}(M\Sigma_0)
= \operatorname{Tr}\!\left(
(\Sigma_0^{1/2}\Sigma_1\Sigma_0^{1/2})^{1/2}\right)$. Substituting into \eqref{eq:app-energy3} gives
\begin{equation}
\begin{aligned}
&\int_0^1
\mathbb{E}\|v(t,X_t)\|^2\,dt
=
\|\mu_1-\mu_0\|^2
+\\&
\operatorname{Tr}(\Sigma_0)
+
\operatorname{Tr}(\Sigma_1)
-
2\operatorname{Tr}
\!\left(
(\Sigma_0^{1/2}\Sigma_1\Sigma_0^{1/2})^{1/2}
\right),    
\end{aligned}
\end{equation}
which is exactly the Gaussian Wasserstein cost
\eqref{eq:W2-gauss}.

\bibliographystyle{IEEEtran}
\bibliography{refs}

\end{document}